\newtheorem{proposition}{Proposition}
\newcommand{\eg}{\emph{e.g. }}
\newcommand{\etal}{\emph{et.al.}}
\newcommand{\ie}{\emph{i.e. }}
\newcommand{\tabincell}[2]{\begin{tabular}{@{}#1@{}}#2\end{tabular}}
\begin{document}
	\title{Learning Student Networks via Feature Embedding}
	%
	%
	%
	
	\author{Hanting~Chen, Yunhe~Wang, Chang~Xu, Chao~Xu and Dacheng~Tao,~\IEEEmembership{Fellow,~IEEE}
		\thanks{Hanting Chen and Chao Xu are with the Key Laboratory of Machine Perception (Ministry of Education) and Coopertative Medianet Innovation Center, School of EECS, Peking University, Beijing 100871, P.R. China. E-mail:htchen@pku.edu.cn, xuchao@cis.pku.edu.cn}
		\thanks{Hangting Chen and Yunhe Wang are with the Noah's Ark Laboratory, Huawei Technologies Co., Ltd, HuaWei Building, No.3 Xinxi Road, Shang-Di Information Industri Base, Hai-Dian District, Beijing 100085, P.R. China. E-mail:htchen@pku.edu.cn, yunhe.wang@huawei.com}
		\thanks{Chang Xu and Dacheng Tao are with the UBTech Sydney Artificial Intelligence Centre and the School of Computer Science in the Faculty of Engineering and Information Technologies at The University of Sydney, J12 Cleveland St, Darlington NSW 2008, Australia. E-mail: c.xu@sydney.edu.au, dacheng.tao@sydney.edu.au.}}
	
	%
	%

	\markboth{SUBMITTED TO IEEE TRANSACTIONS XXX}%
	{Shell \MakeLowercase{\textit{et al.}}: Bare Demo of IEEEtran.cls for IEEE Journals}
	%

	\maketitle
	
	\begin{abstract}
		Deep convolutional neural networks have been widely used in numerous applications, but their demanding storage and computational resource requirements prevent their applications on mobile devices. Knowledge distillation aims to optimize a portable student network by taking the knowledge from a well-trained heavy teacher network. Traditional teacher-student based methods used to rely on additional fully-connected layers to bridge intermediate layers of teacher and student networks, which brings in a large number of auxiliary parameters. In contrast, this paper aims to propagate information from teacher to student without introducing new variables which need to be optimized. We regard the teacher-student paradigm from a new perspective of feature embedding. By introducing the locality preserving loss, the student network is encouraged to generate the low-dimensional features which could inherit intrinsic properties of their corresponding high-dimensional features from teacher network. The resulting portable network thus can naturally maintain the performance as that of the teacher network. Theoretical analysis is provided to justify the lower computation complexity of the proposed method. Experiments on benchmark datasets and well-trained networks suggest that the proposed algorithm is superior to state-of-the-art teacher-student learning methods in terms of computational and storage complexity.
	\end{abstract}

	\begin{IEEEkeywords}
		deep learning, teacher-student learning, knowledge distillation.
	\end{IEEEkeywords}

	\IEEEpeerreviewmaketitle

	\begin{figure*} 
		\centering
		\includegraphics[width=0.95\linewidth]{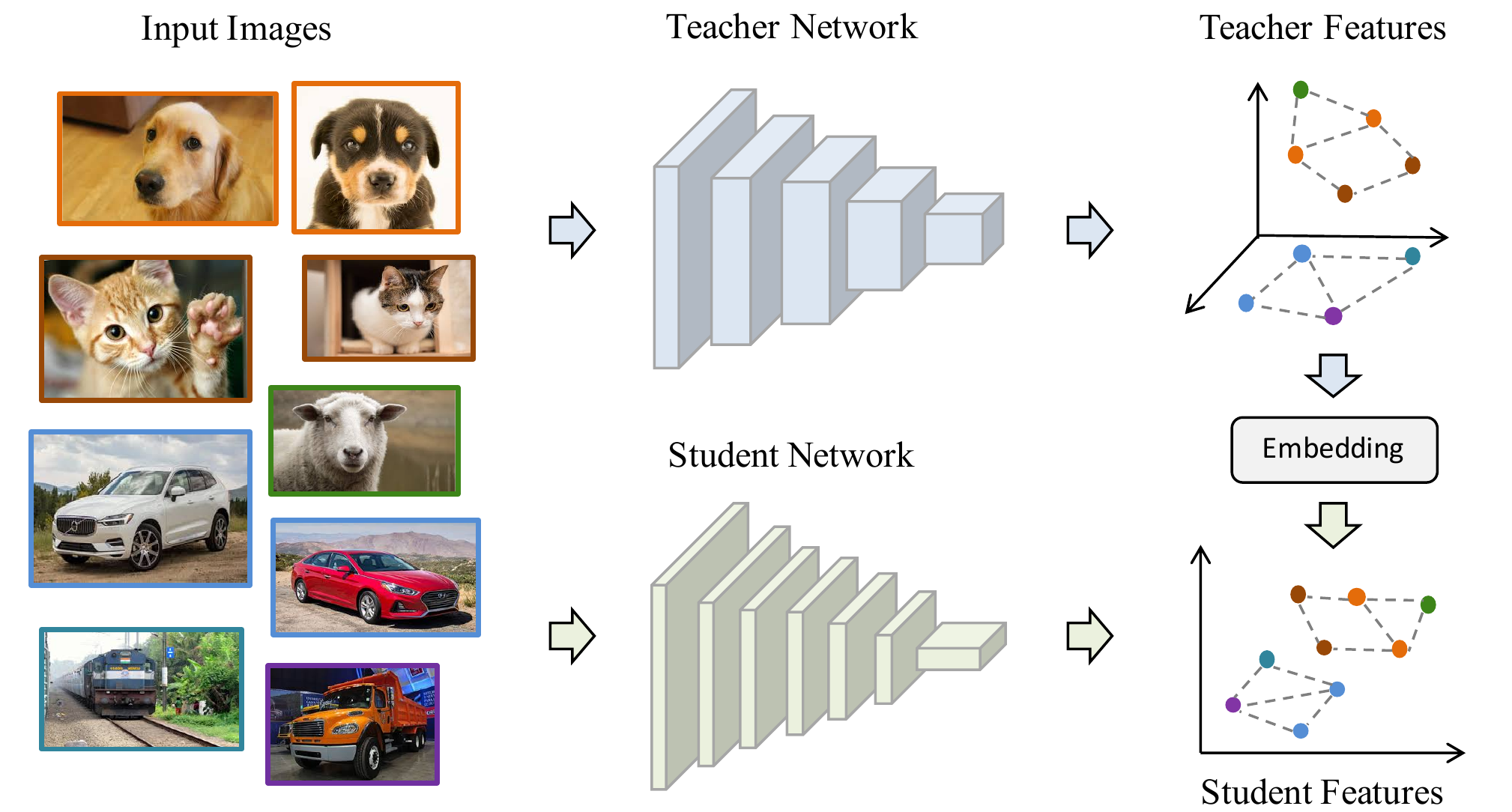}
		\caption{The diagram of the proposed method for learning portable deep neural networks. The top line is the teacher network and the bottom line is the student network. By applying the proposed method with a locality preserving loss, the student network embeds features in a low-dimensional space and maintains the relationship between samples in the high-dimensional space.}
		\label{fig1}
	\end{figure*}
	
	\section{Introduction}
	
	\IEEEPARstart{D}{eep} neural networks (DNNs)  have provided state-of-the-art performance in various fields such as image classification~\cite{VGG,krizhevsky2012imagenet}, semantic modeling~\cite{greff2017lstm,yuan2015scene}, visual quality evaluation~\cite{hou2015blind}, object detection~\cite{ren2015faster,gong2016change}, and segmentation~\cite{long2015fully}.  However, a neural network with considerable parameters requires heavy computation for both training and test, which is difficult to use on edge devices such as mobile phones and smart cameras. For example, a VGGNet~\cite{VGG} consisting of 16 convolutional layers has more than 500\emph{MB} parameters and it requires more than $10^{10}\times$ floating number multiplications, which cannot be tolerated by portable devices. Therefore, how to compress and accelerate existing CNNs has become a research hotspot.
	
	Recently, a variety of CNN compression methods have been proposed to tackle the aforementioned issues such as quantization~\cite{kmeanscompress,cheng2017quantized}, weight and feature approximation~\cite{SVD}, encoding~\cite{Hash}, approximation~\cite{gysel2018ristretto}, and pruning~\cite{wang2016cnnpack,han2015deep}. Wherein, weight pruning based methods achieve the highest compression performance since there are considerable subtle weights in most of pre-trained CNNs. In specific, Han~\etal~\cite{pruning15} showed that over 70\% subtle weights in AlexNet~\cite{krizhevsky2012imagenet} can be removed without affecting its original top-5 accuracy. Wang~\etal~\cite{wang2016cnnpack} further pointed out that the redundancy can exist in both large and small weights and considerable redundancy also exists in modern CNNs such as ResNet~\cite{he2016deep}.

	Although pruning based methods can provide very high compression and speed-up ratios, compressed CNNs by exploiting these approaches cannot be directly used in mainstream platforms (\eg Tensorflow and Caffe) and hardwares (\eg NVIDIA GPU cards) since they require special architectures and implementation tricks (\eg sparse convolution and Huffman encoding). As deeper networks often have higher performance than that of shallow networks~\cite{DoDeep}, the teacher-student learning paradigm ~\cite{hinton2015distilling,RDL,romero2014fitnets,you2017learning,yim2017gift,paying,huang2017like,wangAAAI18} has emerged to learn portable networks (student network) of deeper architecture yet fewer parameters and convolution filters from the original network (teacher network). Compared with other approaches, portable networks generated by the teacher-student paradigm are much more flexible since they are exactly regular neural networks which do not need any additional supports for implementing online inference. 
	
	However, besides directly making outputs of teacher and student networks similar, most of existing methods cannot directly inherit teacher information in other layers to the student network. Since the student network has a thinner architecture than its teacher's, the feature dimensionality (the number of filters) in the student network is much less than that of its teacher. Therefore, a lot of works ~\cite{romero2014fitnets,you2017learning,yim2017gift,paying,huang2017like,wangAAAI18} proposed to use an intermediate fully-connected layer to connect hint and guided layers to approximately inherit the teacher information. In addition, the fully-connected layer brings in a large number of auxiliary parameters, which have larger space and computational complexities and cannot be applied on large-scale CNNs in practice. Taking Student 4 in Table~\ref{table2} as an example, the memory usage for storing the fully-connected layer is about 135\emph{MB}, which is much larger than those of the student network (9\emph{MB}) and the teacher network (35\emph{MB}). 
	
	Given high-dimensional features from teacher network and low-dimensional features from student networks, it is natural to regard the information propagation between these two networks as a feature embedding task. Therefore, we propose a manifold learning based method for learning portable CNNs. In summary, the proposed approach makes the following contributions:
	\begin{itemize}
		\setlength{\itemsep}{4pt}
		\setlength{\parsep}{0pt}
		\setlength{\parskip}{0pt}
		\item We propose to bridge teacher and student networks via a feature embedding task, so that student networks with fewer filters can generate low-dimensional features to preserve relationships between examples.
		\item We introduce the locality preserving loss into the teacher-student learning paradigm and provide theoretical analysis to justify the lower computation complexity of the proposed algorithm.
		\item Experiments on benchmarks demonstrate that the proposed method can efficiently learn portable networks with state-of-the-art performance.
	\end{itemize}
	This paper is organized as follows. Section~\ref{sec:related} investigates related works on network compression algorithms. Section~\ref{sec:method} proposes the student network learning method by feature embedding. Section~\ref{sec:ana} analyzes the computational and space complexity of the proposed method. Section~\ref{sec:experi} shows the experimental results of the proposed method on several benchmark datasets and Section~\ref{sec:conclu} concludes this paper.

	\section{Related Works}\label{sec:related}
	
	Since CNNs require heavy computation and storage, it is difficult to adapt a neural network to real-world applications directly. Recently, various related works have been proposed to reduce the complexity of CNNs. Compression methods can be grouped into network trimming, layer decomposition and knowledge distillation based on their techniques.
	
	\subsection{Network Trimming}
	Network Trimming aims to remove redundant neurons in CNNs to accelerate and compress the original network. Gong~\etal~\cite{gong2014compressing} suggested vector quantization to represent similar connections using a cluster center. Denton~\etal~\cite{SVD} exploited the singular value decomposition approach and decomposed the weight matrices of fully connect layers. Considering that 32-bit floating numbers are overfined for parameters of CNNs, Courbariaux~\etal~\cite{courbariaux2016binarized} and Rastegari~\etal~\cite{rastegari2016xnor} explored binarized neural networks, whose weights are -1/1 or -1/0/1. Han~\etal~\cite{han2015deep} utilized pruning, quantization and Huffman coding to achieve a higher compression ratio. In addition, Wang~\etal~\cite{wang2016cnnpack} introduced the discrete cosine transform (DCT) bases and converted convolution filters into the frequency domain, thereby producing a much higher compression ratio and speed improvement. Subsequently, Wang~\etal~\cite{wang2017beyond} compressed feature map of CNN in the frequency domain and accelerated the calculation of convolution directly. Sun~\etal~\cite{sun2017design} introduced least absolute shrinkage and selection operator to design a selection method for efficient networks. Wang~\etal~\cite{wang2018novel} proposed a novel pruning algorithm with better generalization and pruning efficiency by utilizing Group Lasso. Huang and Yu~\cite{huang2018ltnn} compressed the weight matrices during training by reshaping them into a high-dimensional tensor with a low-rank approximation.

	Although aforementioned algorithms achieve satisfactory results in CNN compression, architectures of networks compressed by these methods would be significantly different from the that of ordinary networks, which means that special implementations are required for high-speed inference and development costs are increased.
	
	\subsection{Layer Decomposition}
	Traditional layers in DNNs (\eg convolutional layers, fully-connect layers) often result in huge computational cost. Therefore, a number of works aim to design lightweight layers to obtain efficient networks. Jin~\etal~\cite{jin2014flattened} presented fully factorized convolutions to accelerate the feedforward of deep networks. Wang~\etal~\cite{wang2017factorized} factorized the convolutional
	layer by considering spatial convolution. SqueezeNet~\cite{iandola2016squeezenet} achieved an accuracy similar with AlexNet yet with 50$\times$ fewer parameters by utilizing a bottleneck architecture. Pang~\etal~\cite{pang2018convolution} proposed sparse shallow MLP to construct a deep network with few parameters and high accuracy. Since the convolution calculation in CNNs is time-consuming, a various of algorithms focused on redesigning the convolutional layers. MobileNets~\cite{howard2017mobilenets} introduced depth-wise separable convolutions that largely reduced the computation cost of convolutional layers. ShuffleNet~\cite{zhang2017shufflenet} combined pointwise group convolution and channel shuffle to decrease the computational complexity while maintaining accuracy. Wu~\etal~\cite{wu2017shift} presented a parameter-free ``shift" operation to alternate vanilla convolutions. Moreover, Sandler~\etal~\cite{sandler2018mobilenetv2} improved MobileNets by introducing an inverted residual structure, which consists of thin and linear bottleneck layers. Ma~\etal~\cite{ma2018shufflenet} proposed a new metric beyond FLOPs to evaluate the speed of CNNs, which leads to ShuffleNet V2. Unlike the methods to design a low-cost convolutional layer, Wang~\etal~\cite{wang2018learning} reused the filters in CNNs by exploiting versatile filters and achieved a better performance.

	\subsection{Knowledge distillation}
	Different from directly compressing the heavy networks, some works investigate the intrinsic information of original networks to learn smaller networks. Knowledge Transfer, first pioneered by Hinton~\etal~\cite{hinton2015distilling}, aims to improve the training of a student network by borrowing knowledge  from another powerful teacher network, while the student network has fewer parameters. It uses a softened version of the final output of a teacher network called softened target to instruct the student network. Besides the outputs, features of intermediate layers in teacher networks also contain useful information which can guide the learning of student networks. Therefore, Romero~\etal~\cite{romero2014fitnets} minimized the difference between features of a hint layer in the student network and a guide layer in its teacher network, which enables the student network to receive sufficient information from teacher network. McClure and Kriegeskorte~\cite{RDL} proposed the pairwise distance of samples as a useful knowledge to transfer, which increases the robustness of student network. Motivated by ensemble learning methods, You~\etal~\cite{you2017learning} simultaneously utilized multiple teacher networks to learn a better student network. Moreover, several algorithms have been developed to investigate the restriction between teacher and student. Zagoruyk and Komodakis~\cite{paying} exploited attention mechanism and proposed to transfer attention maps that are the summaries of full activations. Huang and Wang~\cite{huang2017like} treat the knowledge transfer as a distribution matching problem and used the Maximum Mean Discrepancy (MMD) metric to minimize the difference between features from teacher and student networks. Wang~\etal~\cite{wangAAAI18} exploited generative adversarial network to make feature distribution of teacher and student networks similar.
	
	Compared to network trimming approaches, teacher-student learning paradigms can be applied to mainstream hardware without special requirements, and can be combined with layer decomposition methods easily. Existing knowledge distillation algorithms can learn efficient networks under the guidance of the teacher networks. However, these methods usually exploited a fully-connect layer to bridge the gap between the high-dimensional features from teacher network and low-dimensional features from student network, which introduced a large number of additional parameters. For example, a $19GB$ fully-connect layer is required to connect $7\times7\times2048$ dimensional intermediate features in the teacher network (\eg ResNet-101) and $7\times7\times1024$ dimensional intermediate features in the student network (\eg Inception-BN). Given these huge space and computational complexities, a flexible and effective algorithm to transfer knowledge between features of the teacher and student is urgently required.
	
	\section{Student Network Embedding}\label{sec:method}
	This section first reviews related works on the teacher-student learning paradigm, and then presents the student network embedding approach by introducing a locality preserving loss.
	
	\subsection{Teacher-Student Interactions}
	To learning student networks with portable architectures, Hinton~\etal~\cite{hinton2015distilling} first proposed the Knowledge Distillation (KD) approach, which utilizes a softened output of a teacher network to transform information to a smaller network. McClure and Kriegeskorte~\cite{RDL} further proposed to minimize the pairwise distance of samples after employing the student network and the teacher network. 
	
	Let $\mathcal{N}_{T}$ and $\mathcal{N}_{S}$ denote the original pre-trained convolutional neural network (teacher network) and the desired portable network (student network), respectively. The goal of knowledge distillation is utilizing $\mathcal{N}_{T}$ to enhance the performance of $\mathcal{N}_{S}$. Denote the softmax output of the teacher network  $\mathcal{N}_{T}$ as $P_{T}=\mbox{softmax}(a_{T})$, where $a_{T}$ denotes activations input into the softmax layer. Similarly, $P_{S}=\mbox{softmax}(a_{S})$ and $a_{S}$ are softmax output and activations of the student network $\mathcal{N}_{S}$, respectively. Hinton~\etal~\cite{hinton2015distilling} introduced the soften softmax output $\tau(P_{T})$ and $\tau(P_{S})$, which can be calculated as:
	\begin{equation}
	\tau(P_{T})=\mbox{softmax}(\frac{a_{T}}{\tau}), \quad \tau(P_{S})=\mbox{softmax}(\frac{a_{S}}{\tau}).
	\end{equation}
	Comparing with original one-hot like output, the soften output can transfer more information since it contains relationship between different classes. By matching the soften outputs of $\mathcal{N}_{T}$ and $\mathcal{N}_{S}$, student networks could inherits useful information from the teacher network. 
	The student network is then learned using the following loss function:
	\begin{equation}
	\label{eq1}
	\mathcal{L}_{KD}(\mathcal{N}_{S})=\mathcal{H}(\mbox{y},P_{S})+\lambda\mathcal{H}(\tau(P_{T}),\tau(P_{S})),
	\end{equation}
	where $\mathcal{H}$ is the cross-entropy loss, $\mbox{y}$ is the ground-truth label and $\lambda$ is a Trade-off parameter. The first term denotes the classical classification objective

	However, since architectures of teacher and student networks are significantly different, the constraint on the final output layer cannot be easily achieved. In addition, given the fact that features in the hidden layer also contain useful information, Romero~\etal~\cite{romero2014fitnets} presented a more flexible FitNet approach by introducing an intermediate hidden layer to connect teacher and student networks, which achieves higher performance than that of KD method. FitNet is trained in a two-stage fashion following the student-teacher paradigm. Specifically, a fully-connected layer is first added after the guided layer in the student network. Let $f_{S}$ and $f_{T}$ denote the features generated by guided layer and hint layer in the teacher network. The loss function used in the first stage can be formulated as:
	\begin{equation}
	\mathcal{L}_{HT}(\mathcal{N}_{S})=\frac{1}{2}\Vert r(f_{S})-f_{T}  \Vert^2,
	\label{Fcn:FitNet}
	\end{equation}
	where $r$ is the fully connect layer added to match $f_{S}$ and $f_{T}$. Then, the student network $\mathcal{N}_S$ is further tuned using knowledge distillation as illustrated in Eq.~\ref{eq1} in the second stage. Since the feature dimensionality of the intermediate hint layer is much higher than that of the softmax layer, FitNet can transfer more useful information thus yield a student network with higher performance.
	
	In addition, lots of works have been proposed to further enhance the accuracy of the student network by introducing different assumptions. For example, Yim~\etal~\cite{yim2017gift} introduced the FSP (Flow of Solution Procedure) matrix to transfer the relationship between convolutional layers. You~\etal~\cite{you2017learning} simultaneously utilized multiple teacher networks for learning a more accurate student network. Zagoruyko~\etal~\cite{paying} transferred useful information from the teacher network using the attention maps. Huang~\etal~\cite{huang2017like} minimized the Maximum Mean Discrepancy (MMD) loss between feature maps from teacher and student networks. Wang~\etal~\cite{wangAAAI18} utilized the generative adversarial network to make feature distributions of both teacher and student networks similar. However, there are still two important issues to be addressed: 1) Eq~\ref{Fcn:FitNet} independently considers each individual data point without investigating their connections; 2) the introduced fully-connected layer $r$ increases the cost for training the student network.
	
	\subsection{Locality Preserving Loss}
	As mentioned above, the feature dimensionality of the student network is lower than that of the teacher network (\eg from 6912 to 5120), thus we propose to regard the portable network learning as a low-dimensional embedding procedure, which aims to learn effective low-dimensional features. Considering that input images with similar content should lie on the neighbor area in both high-dimensional and low-dimensional spaces, we propose to exploit the manifold learning approach to address the teacher-student learning paradigm.
	
	Lots of nonlinear manifold learning methods have been proposed for obtaining accurate low-dimensional representations. For instance, locally linear embedding (LLE~\cite{roweis2000nonlinear}) attempts to represent the manifold locally by reconstructing each input point as a weighted combination of its neighbors. Isomap~\cite{tenenbaum2000global} preserves geometric distances by returning an embedding where the distances between points is approximately equal to the shortest path distance, and laplacian eigenmaps (LE~\cite{belkin2002laplacian}) builds a graph incorporating neighborhood information of the dataset to compute a low-dimensional representation of the data set that optimally preserves local neighborhood information in a certain sense. However, these nonlinear methods are not applicable  for large scale problems, due to their enormous computation and storage resource costs. In contrast, locality preserving projections (LPP~\cite{he2004locality}) is a linear alternative to those nonlinear methods and can be easily embedded into the learning of convolutional neural networks.
	
	Specifically, given a labeled training set with $n$ samples. $\{(\mbox{x}^1,\mbox{y}^1),(\mbox{x}^2,\mbox{y}^2),\cdots,(\mbox{x}^n,\mbox{y}^n)\}$, we denote features of $\mbox{x}^i$ extracted by the teacher and student networks as $f^i_{T}$ and $f^i_{S}$, respectively. Therefore, we propose to preserve the local relationship of the features generated by the student network like that of its teacher, which can be formulated as:
	\begin{equation}
	\min\limits_{W_{S}} \sum\limits_{i,j} \alpha_{ij} \Vert f_{S}^i - f_{S}^j\Vert_2^2,
	\label{Fcn:LP1}
	\end{equation}
	where $W_{S}$ is the parameters of the student network before the guided layer and $\alpha_{i,j}$ describes the local relationship between the features generated by the selected hint layer of the teacher network. Specifically, $\alpha_{i,j}$ is defined as follows:
	\begin{equation}
	\alpha_{i,j} =   
	\left\{
	\begin{aligned}
	& \mbox{exp}\left(-\frac{\Vert f_{T}^i - f_{T}^j \Vert^2_2}{\sigma ^2}\right) \;  &if\; {j\in N(i)},\\
	& 0 \;  &otherwise,
	\end{aligned} 
	\right.
	\end{equation}
	where $N(i)$ denotes the $k$ nearest neighbor of the feature $f_{T}^i$ of the $i$-th image $\mbox{x}^i$ generated by teacher network, and $\sigma$ is a normalized constant.
	
	\begin{algorithm}[t] 
		\caption{Learning student network by exploiting the proposed locality preserving loss.} 
		\label{alg1} 
		\begin{algorithmic}[1] 
			\Require 
			A given teacher network $\mathcal{N}_T$ and its training set $\mathcal{X}$ with $n$ instances, and the corresponding $k$-label set $\mathcal{Y}$, parameters: $\lambda$, $\gamma$, and $\tau$. 
			\State Initialize the student network $\mathcal{N}_S$, whose number of parameters is significantly fewer than that in $\mathcal{N}_T$;
			\Repeat
			\State Randomly select a batch $\{(\mbox{x}^i,\mbox{y}^i)\}^m_{i=1}$;
			\State Employ the teacher network on the mini-batch: \\\quad\quad\quad\quad$[\tau(P_{T}),P_T,f_T]\leftarrow \mathcal{N}_T(\mbox{x})$;
			\State Employ the student network on the mini-batch: \\\quad\quad\quad\quad $[\tau(P_{S}),P_S,f_S]\leftarrow \mathcal{N}_S(\mbox{x})$;
			\State Calculate the LP loss $\mathcal{L}_{LP} \leftarrow \frac{1}{m} \sum\limits_{i,j} \alpha_{ij} \Vert f_{S}^i - f_{S}^j\vert_2^2$;
			\State Calculate the loss function $\mathcal{L}_{Total}$ (Fcn.\ref{eq3});
			\State Update weights in $\mathcal{N}_S$ using gradient descent; 
			\Until convergence
			\Ensure 
			The student network $\mathcal{N}_S$.
		\end{algorithmic} 
	\end{algorithm}

	We can obtain a student network $\mathcal{N}_S$ preserving relationship between samples from a high-dimensional space to the target low-dimensional space by optimizing Eq.~\ref{Fcn:LP1}. However, to compute the $k$ nearest neighbors, we need to take the entire training set in each iteration, which is inefficient. Therefore, we use the mini-batch strategy to learn the student network, and the $k$-nearest neighbors will only be discovered within the mini-batch. The locality preserving loss function is therefore reformulated as:
	\begin{equation}
	\mathcal{L}_{LP} = \frac{1}{2m} \sum\limits_{i,j} \alpha_{ij} \Vert f_{S}^i - f_{S}^j\vert_2^2,
	\label{Fcn:LP2}
	\end{equation}
	where $m$ is the batch size of the student network.
	
	In addition, the ground-truth label data is also used for helping the training process of the student network. The entire objective function of the proposed network is then formulated as: 
	\begin{equation}
	\small
	\label{eq2}
	\mathcal{L}_{Total} = \frac{1}{2m}\left[\sum\limits_{i}\mathcal{H}(\mbox{y}^{i},P^i_{S}) +\gamma\sum\limits_{i,j} \alpha_{ij} \Vert f_{S}^i - f_{S}^j\Vert_2^2\right],
	\end{equation}
	where $\gamma$ is the weight parameter for seeking the trade-off of two different terms and $P^i_\mathcal{S}$ is the output of the classifier in the student network for the $i$-th sample $\mbox{x}^i$.
	
	The first term in Fcn. \ref{eq2} minimizes the cross entropy loss of classifier outputs to maintain the performance of the student network, and the second term embeds samples from the high-dimensional space to the low-dimensional space in the portable student network $\mathcal{N}_S$. Nevertheless, the knowledge distillation approach as discussed in Eq.~\ref{eq1} can be incorporated to further inherit more useful information from the teacher network. Therefore, we reformulate Eq.~\ref{eq2} as
	\begin{equation}
	\label{eq3}
	\begin{aligned}
	\mathcal{L}_{Total} = &\frac{1}{m}\sum\limits_{i}\left[\mathcal{H}(\mbox{y}^{i},P^i_{T})
	+\lambda\mathcal{H}(\tau(P^i_{T}),\tau(P^i_{T}))\right]\\
	&+\gamma\frac{1}{m}\sum\limits_{i,j} \alpha_{ij} \Vert f_{S}^i - f_{S}^j\Vert_2^2 .
	\end{aligned}
	\end{equation}
	
	Then, we use stochastic gradient descent (SGD) approach to optimize the student network. Since the proposed LP loss is exactly a linear operation, and the gradient of $\mathcal{L}_{LP}$ with respect to $f_{S}^i$ can be easily calculated as:
	\begin{equation}
	\frac{\mathcal{L}_{LP}}{\partial f_{S}^i} = \frac{1}{m}\sum\limits_{j:j\neq i} \alpha_{ij} (f_{S}^i - f_{S}^j ).
	\end{equation}
	The first term in Eq.~\ref{eq3}, \ie the classification loss, will affect all parameters in the student network, and parameters in $\mathcal{N}_S$ before the guided layer will be additionally updated by:
	\begin{equation}
	\frac{\partial \mathcal{L}_{LP}}{\partial W_{S}} = \sum\limits_{i=1}^m\frac{\partial \mathcal{L}_{LP}}{\partial f_{S}^i} \cdot\frac{\partial f_{S}^i}{\partial W_{S}},
	\end{equation}
	where $\frac{\partial f_{S}^i}{\partial W_{S}}$ is the gradient of the feature $ f_{S}^i$. Alg.\ref{alg1} summarizes the detailed procedure of the proposed approach for learning student networks.

	\section{Analysis on the Complexity}\label{sec:ana}
	As mentioned above, traditional FitNet based methods introduce an additional fully-connected layer, which makes the training procedure of the student network very slow. In contrast, the proposed method does not need the fully-connected layer for connecting teacher and student networks. Thus, the space complexity and the computational complexity are much lower than those of FitNet based methods, as analyzed in Proposition~\ref{prop}.
	
	\begin{proposition}
		Denote dimensionalities of features generated by the teacher network and the student network as $d_T$ and $d_S$, respectively. For a mini-batch with $m$ samples, the computational complexity of the proposed scheme for inheriting information from the teacher network to the student network is $\mathcal{O}(m^2(d_S+d_T))$.
		\label{prop}
	\end{proposition}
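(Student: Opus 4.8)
The plan is to separate the operations that genuinely implement the teacher-to-student information transfer --- the construction of the affinity coefficients $\alpha_{ij}$ from the teacher features and the forward/backward evaluation of the locality preserving term $\mathcal{L}_{LP}$ of Eq.~\ref{Fcn:LP2} --- from the ordinary forward and backward passes through $\mathcal{N}_T$ and $\mathcal{N}_S$, which every teacher--student scheme performs and which are therefore not charged against the transfer mechanism. I would then bound the three relevant pieces in turn and add them up.

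\emph{Affinities.} For a mini-batch of $m$ samples one forms all pairwise squared Euclidean distances $\Vert f_T^i-f_T^j\Vert_2^2$ between the $d_T$-dimensional teacher features, at a cost of $\mathcal{O}(m^2 d_T)$; selecting, for each row, the $k$ nearest neighbours inside the batch and applying the exponential and the normalization by $\sigma^2$ adds only $\mathcal{O}(m^2)$ comparisons and $\mathcal{O}(m^2)$ scalar operations (or $\mathcal{O}(m^2\log m)$ if one sorts each row), which is dominated by the distance computation since the feature dimension exceeds $\log m$ in every regime of interest. Moreover $\alpha_{ij}$ depends only on the fixed teacher features, so it can be precomputed once; even recomputing it each iteration stays within the stated budget.

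\emph{Forward and backward of $\mathcal{L}_{LP}$.} Evaluating $\mathcal{L}_{LP}=\frac{1}{2m}\sum_{i,j}\alpha_{ij}\Vert f_S^i-f_S^j\Vert_2^2$ requires the pairwise differences $f_S^i-f_S^j$ of the $d_S$-dimensional student features, \ie $\mathcal{O}(m^2 d_S)$ arithmetic, followed by an $\mathcal{O}(m^2)$ weighted summation of scalars. For the backward pass, the gradient $\partial\mathcal{L}_{LP}/\partial f_S^i=\frac{1}{m}\sum_{j\neq i}\alpha_{ij}(f_S^i-f_S^j)$ is, for each of the $m$ indices, a weighted combination of $m-1$ vectors in $\mathbb{R}^{d_S}$, hence $\mathcal{O}(m d_S)$ per index and $\mathcal{O}(m^2 d_S)$ overall (the pairwise differences can be reused from the forward pass). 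Summing the three contributions yields $\mathcal{O}(m^2 d_T)+\mathcal{O}(m^2 d_S)=\mathcal{O}(m^2(d_S+d_T))$. Propagating $\partial\mathcal{L}_{LP}/\partial f_S^i$ into $W_S$ through $\partial f_S^i/\partial W_S$ is one additional back-propagation through the sub-network before the guided layer, whose cost equals that of an ordinary backward pass and is, again, common to all methods.

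The delicate point is purely one of accounting: deciding which costs are ``new'' and being consistent about whether the in-batch $k$-nearest-neighbour search is billed at $\mathcal{O}(m^2)$ or $\mathcal{O}(m^2\log m)$ --- in both cases it is absorbed. It is then illuminating to contrast the bound with FitNet: the regressor $r$ in Eq.~\ref{Fcn:FitNet} is a dense $d_S\times d_T$ matrix, so it stores $\Theta(d_S d_T)$ extra parameters and costs $\Theta(m d_S d_T)$ per batch for its own forward and backward. Consequently, whenever $m\ll\min(d_S,d_T)$ --- the typical situation, with \eg $m$ in the hundreds and feature dimensions in the thousands --- the proposed scheme is strictly cheaper in both time and memory, which is exactly what Proposition~\ref{prop} records.
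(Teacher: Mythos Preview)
Your argument is correct and follows essentially the same decomposition as the paper: compute pairwise teacher distances ($\mathcal{O}(m^2 d_T)$), select $k$ nearest neighbours within the batch (a lower-order term absorbed by the distance computation), and compute the student pairwise terms for $\mathcal{L}_{LP}$ ($\mathcal{O}(m^2 d_S)$), summing to $\mathcal{O}(m^2(d_S+d_T))$. Your version is slightly more thorough in that it also accounts for the backward pass of $\mathcal{L}_{LP}$ and explicitly contrasts with FitNet's $\Theta(m d_S d_T)$ regressor cost, whereas the paper bills the $k$-NN step as $\mathcal{O}(km^2)$ and defers the FitNet comparison to the discussion after the proof; these are cosmetic differences only.
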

	
	\begin{proof}
		The proposed method inherits information from the teacher network to the student network by calculating the locality preserving loss $\mathcal{L}_{LP}$, whose computational complexity is calculated through three steps. 
		
		The first step is to calculate distances between features generated by the teacher network, \ie $\Vert f_{T}^i - f_{T}^j\Vert_2^2$, whose computational complexity is $\mathcal{O}(m^2d_T)$. The second step is to find the $k$ nearest neighbors of each feature $f_{T}^i$ to calculate $\alpha_{ij}$ in Eq.~\ref{Fcn:LP2}, whose computational complexity is $\mathcal{O}(km^2)$. Then, distances between features generated by the student network, \ie $\Vert f_{S}^i - f_{S}^j\Vert_2^2$ will be calculated to obtain $\mathcal{L}_{LP}$, with the computational complexity $\mathcal{O}(m^2d_S)$. Thus, the computational complexity of the proposed method is $\mathcal{O}(m^2(d_S+d_T+k))$.
		
		Considering that, the dimensionality of features in a CNN is usually much larger than $k$ and $m$, \eg $k = 5$, $m=128$, $d_T = 6\times6\times192=6912$, and $d_S=8\times8\times80=5120$ in the Student 4 network as illustrated in Table~\ref{table2}, the complexity $\mathcal{O}(km^2)$ could be ignored. Therefore, the computational complexity of the proposed method is $\mathcal{O}(m^2(d_S+d_T))$.
	\end{proof}
	
	In contrast, traditional methods~\cite{romero2014fitnets,wangAAAI18,you2017learning,paying,huang2017like} use a fully-connected layer to build the connection between teacher and student network have a $\mathcal{O}(md_Sd_T)$ computational complexity. According to Proposition~\ref{prop}, the proposed teacher-student learning paradigm has a much smaller computational complexity, which will accelerate the learning process for portable student networks. Taken the Student 4 network in Table~\ref{table2} as an example, $md_Sd_T/m^2(d_S+d_T)\approx23$, since $m$ is much smaller than either $d_S$ or $d_T$. In addition, considering $k = 5$, $m=256$, $d_T = 7\times7\times1024=50176$, and $d_S=7\times7\times2048=100352$ in Table~\ref{table4}, $md_Sd_T/m^2(d_S+d_T)\approx131$. If the teacher network becomes more complex, $d_T$ and $\mathcal{O}(md_Sd_T)$ will be increased significantly. In addition, the fully-connected layer introduces considerable parameters with a $\mathcal{O}(d_Sd_T)$ space complexity, to store such a fully-connected layer would require more than 100\emph{MB} memory usage in practice. While the memory usage for storing parameter in the student network is only about 9\emph{MB}. In contrast, the proposed method does not have any additional parameters. We will further illustrate this superiority in the experiment part.
	
	\begin{figure*}[t]
		\centering
		\begin{tabular}{cc}
			\includegraphics[width=0.3\linewidth]{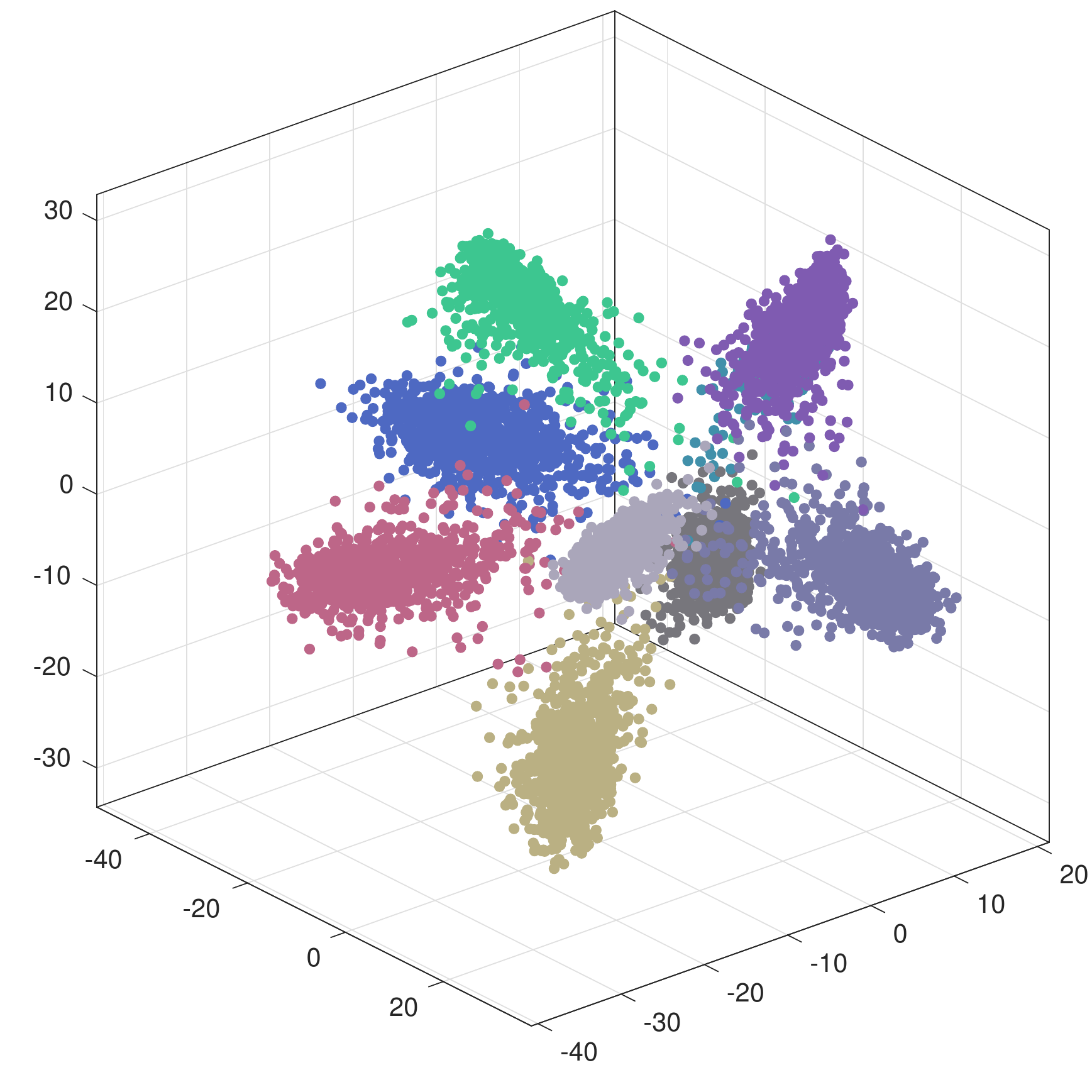} \quad\quad&\quad\quad
			\includegraphics[width=0.3\linewidth]{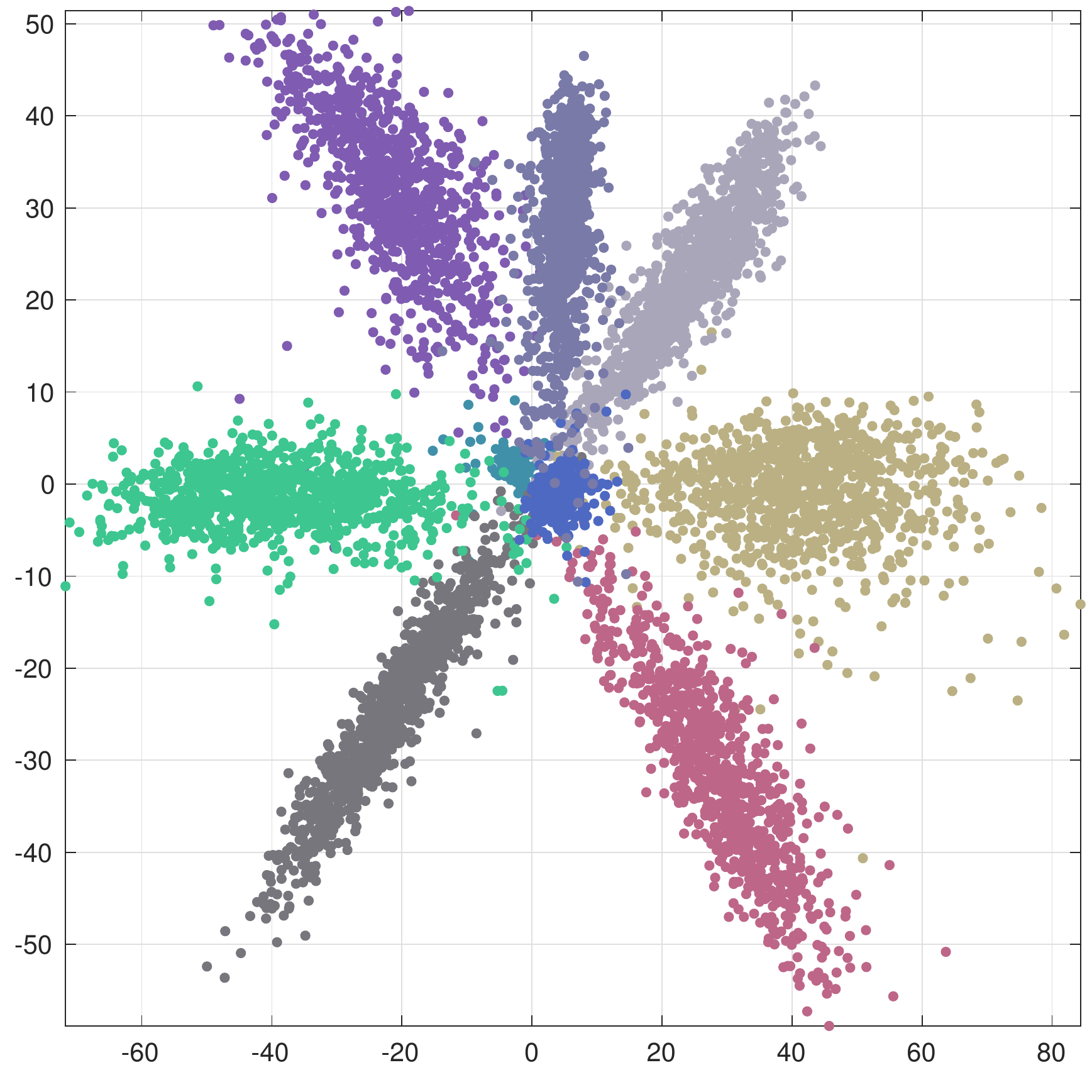} \\
			(a) accuracy = $99.3\%$ \quad\quad&\quad \quad(b) accuracy = $97.5\%$  \\&\\
			\includegraphics[width=0.3\linewidth]{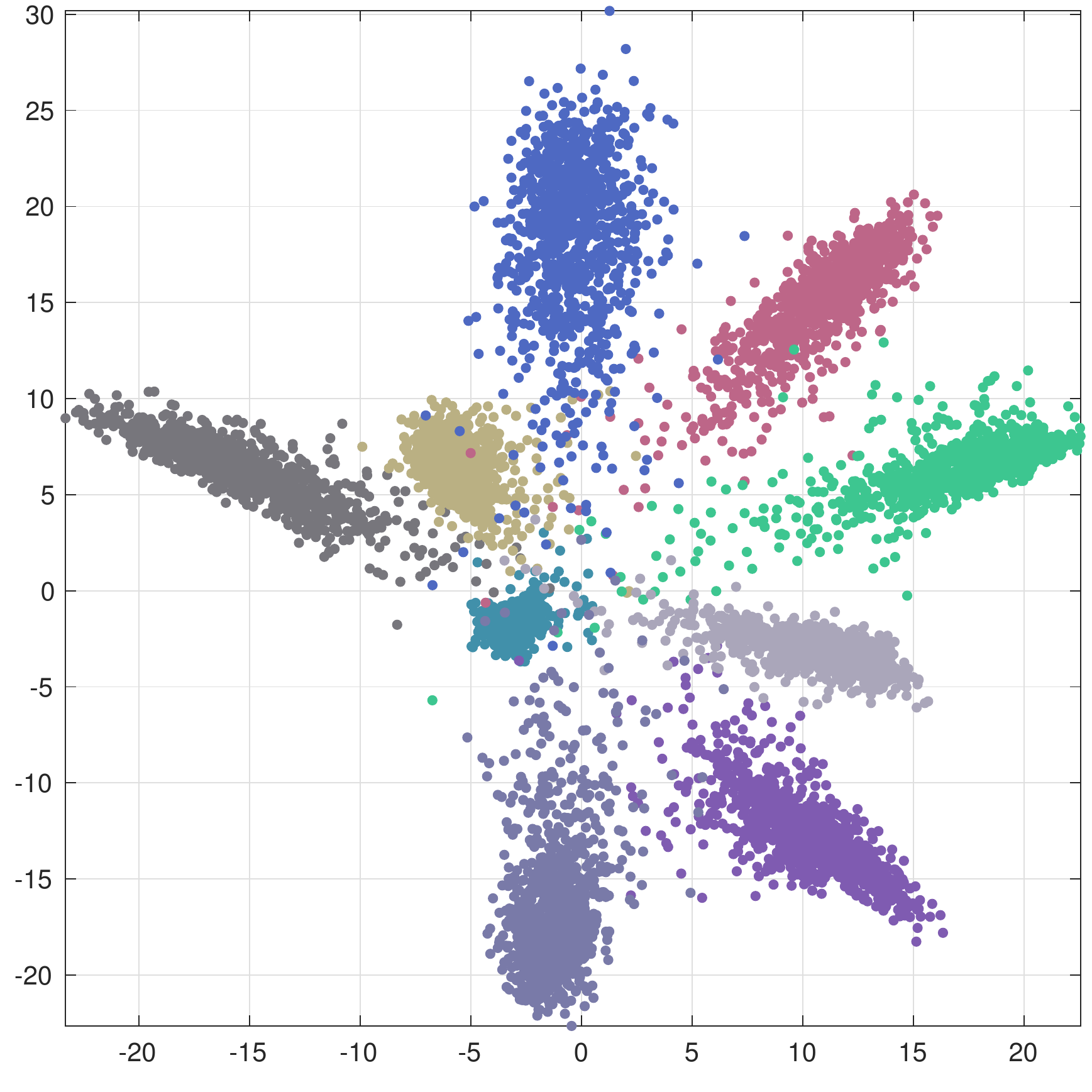}\quad\quad &\quad\quad
			\includegraphics[width=0.3\linewidth]{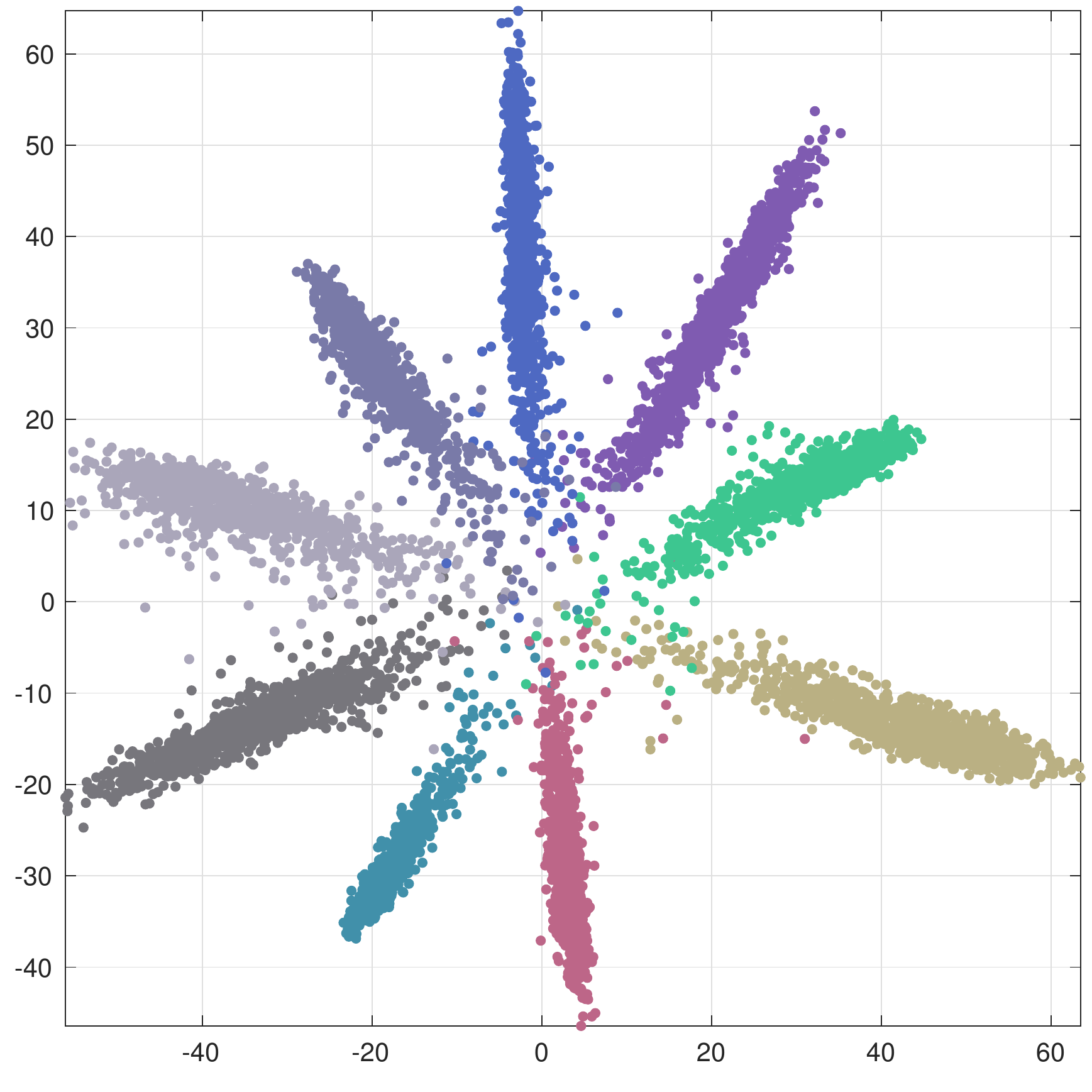} \\
			(c) accuracy = $98.5\%$\quad\quad& \quad\quad(d) accuracy = $99.2\%$\\
		\end{tabular}
		\caption{Visualization of features generated by different networks on the MNIST test set. (a) features of the original teacher network; (b) features of the student network learned using the standard back-propagation strategy; (c) features of the student network learned using FitNet; (d) features of the student network learned using the proposed method with locality preserving loss. Note that features of the same category in each figures are marked with the same color.}
		\label{Fig:visual}
	\end{figure*}

	\section{Experiments}\label{sec:experi}
	In this section, we implement experiments to validate the effectiveness of the method on three benchmark datasets, including MNIST, CIFAR-10, and CIFAR-100. Experimental results are further analyzed and discussed to investigate the benefits of the proposed method.

	\subsection{Validations on MNIST}
	\noindent\textbf{Visualization of Features.}
	The locality preserving loss was introduced in Fcn.~\ref{eq3}, which aims to learn the effective low-dimensional features from the pre-trained teacher network. In order to illustrate the superiority of the proposed method, we first trained a LeNet++~\cite{centerloss} as the teacher network, which has six convolutional layers and a fully-connected layers for extracting powerful 3D features. Numbers of neurons in each convolutional layer are 32, 32, 64, 64, 128, 128, and 3, respectively. This teacher network achieved a $99.3\%$ test accuracy on the MNIST dataset, and the memory usage for storing all convolution filters of this teacher network is about $2,982$\emph{KB}.
	
	Then, a thinner student network with also six convolutional layers but half convolution filters per layer was initialized. As for the fully-connected layer, the number of neurons was set as 2 for seeking a low-dimensional embedding. Then, we trained this network using conventional back-propagation scheme, and the FitNet method on the MNIST dataset, respectively. Since the student network has fewer parameters, the accuracy of the network using conventional BP is only $97.5\%$, and the accuracy of the network using FitNet is about $98.5\%$.
	
	We then trained a new student network with the same architecture by exploiting the proposed method as described in Fcn.~\ref{eq3}. $\lambda$ and $\tau$ were equal to $2$ and $0.5$, respectively, which refer to those in Hinton~\etal~\cite{hinton2015distilling}. The learning rate $\eta$ was set to be $0.01$, empirically. The hyper-parameter $k$ for searching neighbors was set as 5, and $\gamma$ was set to be $1$. The accuracy of the resulting student network is $99.2\%$ which is slightly lower than that of its teacher network but much higher than that of the student network straightforwardly learned using the conventional back-propagation method. In addition, the memory usage for convolution filters of this student network is about $734$\emph{KB}, which only accounts for $\frac{1}{4}$ of that of the original teacher network. 
	
	\begin{table*}[t]
		
		\caption{The performance of the proposed method on student networks with various architectures.}
		\label{table2}
		\small
		\addtolength{\tabcolsep}{-3pt}
		\begin{center}
			\begin{tabular}{|c|c|c|c|c|c|c|c|c|c|c|c|}
				\hline
				\multirow{2}{*}{\textbf{Network}} & \multirow{2}{*}{\textbf{\#layers}} &\multirow{2}{*}{\textbf{\#params}}&\multirow{2}{*}{\textbf{\#mult}}&\multirow{2}{*}{\textbf{speed-up}}&\multirow{2}{*}{\textbf{compression}}&\multicolumn{2}{|c|}{\textbf{Training time (min)}}&\multicolumn{2}{|c|}{\textbf{Additional params}}&\multicolumn{2}{|c|}{\textbf{Accuracy}}\\
				\cline{7-12}
				& & & & & & \textbf{FitNet}&\textbf{Ours} &\textbf{FitNet}&\textbf{Ours}&\textbf{FitNet}&\textbf{Ours}\\
				\hline
				\hline
				Teacher &5 &$\sim$9M& $\sim$725M & $\times1$ & $\times1$ &\multicolumn{2}{|c|}{150} &\multicolumn{2}{|c|}{-}& \multicolumn{2}{|c|}{90.21\%}\\ 
				\hline
				Student 1&11 &$\sim$250K& $\sim$30M & \textbf{$\times$13.17}& \textbf{$\times$36}&146 & 93&$\sim$14M &0& 89.03\%& 89.56\%\\
				\hline
				Student 2&11 &$\sim$862K& $\sim$108M & $\times4.56$& $\times10.44$& 221&147 &$\sim$35M &0& 91.01\%& 91.33\%\\
				\hline
				Student 3&13 &$\sim$1.6M& $\sim$392M & $\times1.40$& $\times5.62$& 317& 200 &$\sim$35M &0& 91.14\%& 91.57\%\\
				\hline
				Student 4&19 &$\sim$2.5M& $\sim$382M & $\times1.58$ & $\times3.60$& 400 & 265 &$\sim$35M &0& 91.55\%& \textbf{91.91\%}\\
				\hline
			\end{tabular}
		\end{center}
	\end{table*}

	\begin{table*}[t]
		\caption{Different architectures on the CIFAR-10 dataset.}
		\label{table:archi}
		\small
		\begin{center}
			\begin{tabular}{|c|c|c|c|c|}
				\hline
				\textbf{Teacher}&\textbf{Student 1}&\textbf{Student 2}&\textbf{Student 3}&\textbf{Student 4}\\
				\hline
				\hline
				\quad conv 3x3x96 \quad\quad&\quad conv 3x3x16\quad\quad&\quad conv 3x3x16\quad\quad&\quad conv 3x3x32\quad\quad&\quad conv 3x3x32\quad\quad\\
				pool 4x4&conv 3x3x16&conv 3x3x32&conv 3x3x48&conv 3x3x32\\
				&conv 3x3x16&conv 3x3x32&conv 3x3x64&conv 3x3x32\\
				&pool 2x2&pool 2x2&conv 3x3x64&conv 3x3x48\\
				&&&pool 2x2&conv 3x3x48\\
				&&&&pool 2x2\\
				\hline
				conv 3x3x96&conv 3x3x32&conv 3x3x48&conv 3x3x80&conv 3x3x80\\
				pool 4x4&conv 3x3x32&conv 3x3x64&conv 3x3x80&conv 3x3x80\\
				&conv 3x3x32&conv 3x3x80&conv 3x3x80&conv 3x3x80\\
				&pool 2x2&pool 2x2&conv 3x3x80&conv 3x3x80\\
				&&&pool 2x2&conv 3x3x80\\
				&&&&conv 3x3x80\\
				&&&&pool 2x2\\
				\hline
				conv 3x3x96&conv 3x3x48&conv 3x3x96&conv 3x3x128&conv 3x3x128\\
				pool 4x4&conv 3x3x48&conv 3x3x96&conv 3x3x128&conv 3x3x128\\
				&conv 3x3x64&conv 3x3x128&conv 3x3x128&conv 3x3x128\\
				&pool 8x8&pool 8x8&pool 8x8&conv 3x3x128\\
				&&&&conv 3x3x128\\
				&&&&conv 3x3x128\\
				&&&&pool 8x8\\
				\hline
				fc&fc&fc&fc&fc\\
				softmax&softmax&softmax&softmax&softmax\\
				\hline
			\end{tabular}
		\end{center}
	\end{table*}
	
	\begin{table}[h]
		\caption{Classification error on the MNIST dataset.}
		\label{table1}
		\vspace{-0.5em}
		\begin{center}
			\normalsize
			\begin{tabular}{|c|c|c|}
				\hline
				\textbf{Algorithm} & \textbf{\#params} &\textbf{Misclass}\\
				\hline
				\hline
				Teacher & $\sim$361K& 0.55\%\\
				\hline
				Standard back-propagation & $\sim$30K& 1.90\%\\
				\hline
				\tabincell{c}{Knowledge Distillation\\\small{~\cite{hinton2015distilling}}}& $\sim$30K &0.65\%\\
				\hline
				FitNet\small{~\cite{romero2014fitnets}} & $\sim$30K &0.51\%\\
				\hline
				Student (Ours) & $\sim$30K& \textbf{0.48\%}\\
				\hline
			\end{tabular}
		\end{center}
	\end{table}
	
	Moreover, features (\ie input data of the softmax layer) of the above three networks were visualized in Fig.~\ref{Fig:visual}. Fig.~\ref{Fig:visual} (a) shows that features of different categories extracted using the original teacher network are separated from each other in the 3-dimensional space, and can be easily distinguished by the following softmax layer. Since the student network has fewer parameters, features of the network trained using the conventional back-propagation are distorted as illustrated in Fig.~\ref{Fig:visual} (b). Therefore, the performance of this network is lower than that of the teacher network. Fig.~\ref{Fig:visual} (d) shows features of the student network learned using the proposed scheme with LP loss. It is clear that, features are separated by supervising of the proposed locality preserving loss, which can be seen as an excellent low-dimensional embedding of features learned by the teacher network.
	
	\begin{table*}[t]
		\caption{10-Class results of different networks on the CIFAR-10 datasets.}
		\label{table:10class}
		\begin{center}
			\normalsize
			\begin{tabular}{|c|c|c|c|c|c|c|c|c|c|c|}
				\hline
				\textbf{Algorithm} & \textbf{plane} &\textbf{car} &\textbf{bird} &\textbf{cat} &\textbf{deer} &\textbf{dog} &\textbf{frog} &\textbf{horse} &\textbf{ship} &\textbf{truck} \\
				\hline
				\hline
				Teacher &90.1\%&93.8\%&86.0\%&74.6\%&93.5\%&86.2\%&95.2\%&92.6\%&95.3\%&95.2\% \\ 
				\hline
				FitNet~\cite{romero2014fitnets}&90.7\%&97.6\%&91.0\%&82.7\%&93.8\%&86.2\%&92.7\%&93.6\%&94.6\%&93.5\%\\
				\hline
				Knowledge Distillation~\cite{hinton2015distilling}&90.0\%&95.2\%&83.2\%&84.4\%&93.2\%&87.1\%&95.0\%&91.6\%&97.3\%&93.7\%\\
				\hline
				Multiple Teachers~\cite{you2017learning}&91.0\%&96.8\%&90.0\%&83.1\%&92.8\%&87.1\%&93.3\%&94.2\%&94.4\%&93.8\%\\
				\hline
				FSP Learning~\cite{yim2017gift}&90.9\%&96.7\%&90.4\%&82.9\%&93.3\%&87.1\%&95.6\%&92.7\%&96.0\%&93.1\%\\
				\hline
				Adversarial Learning~\cite{wangAAAI18}&91.3\%&96.5\%&89.8\%&84.2\%&91.8\%&87.5\%&93.1\%&95.3\%&93.4\%&93.8\%\\
				\hline
				\hline
				Student(Ours)&91.1\%&97.0\%&90.2\%&83.6\%&92.4\%&87.2\%&95.3\%&93.2\%&95.1\%&94.1\%\\
				\hline
			\end{tabular}
		\end{center}
	\end{table*}
	
	\noindent\textbf{Compression Results.}
	In order to further illustrate the superiority of the proposed method, we followed the setting in Romero~\etal~\cite{romero2014fitnets} and Wang~\etal~\cite{wangAAAI18} to conduct the student network learning experiment on the MNIST dataset. The teacher network consists of maxout convolutional layers as reported in Goodfellow~\etal~\cite{goodfellow2013maxout} and the student network is twice as deep as the teacher network following Romero~\etal~\cite{romero2014fitnets}. The hyper-parameter $k$ of the proposed method for searching neighbors was set as 5, and $\gamma$ was set to be $1$, which were tuned on the last 5,000 images in the train set. The impact of parameters would be showed in the following experiment. The teacher network contains 3 maxout layers and a fully-connected layer, and the student network has 6 maxout layers and a fully-connected layer. Parameters in the student network is only about 8\% of that in the teacher network. The guide layer of the teacher network is the 2nd layer while the hint layer of the student network is the 4th layer, respectively. The parameters of the networks were initialized randomly in U(-0.005,0.005) and the networks were trained using stochastic gradient descent with RMSProp whose learning rate is 0.0005 and weight decay is 0.9. 
	
	Table \ref{table1} reports the results of different networks on the MNIST dataset by exploiting the proposed method. In order to illustrate the advantage of the introduced locality preserving loss, the performance of student networks with the same architecture trained by using standard back-propagation, knowledge distillation~\cite{hinton2015distilling}, and FitNets~\cite{romero2014fitnets} was also reported. It can be found in Table~\ref{table1}, the student network trained using the standard back-propagation achieved a 1.90\% error rate. The student network learned utilizing the knowledge distillation obtained a 0.65\% misclassification error. The error rate of the student network trained by exploiting the FitNet approach is 0.51\%, which outperforms both conventional back-propagation and knowledge distillation methods, and is slightly lower than that of the teacher network. In contrast, the student network using the proposed method achieves a 0.48\% accuracy.
	
	\begin{figure}[h]
		\centering
		\includegraphics[width=0.9\linewidth,trim={0.2cm 0cm -0.5cm 0cm}]{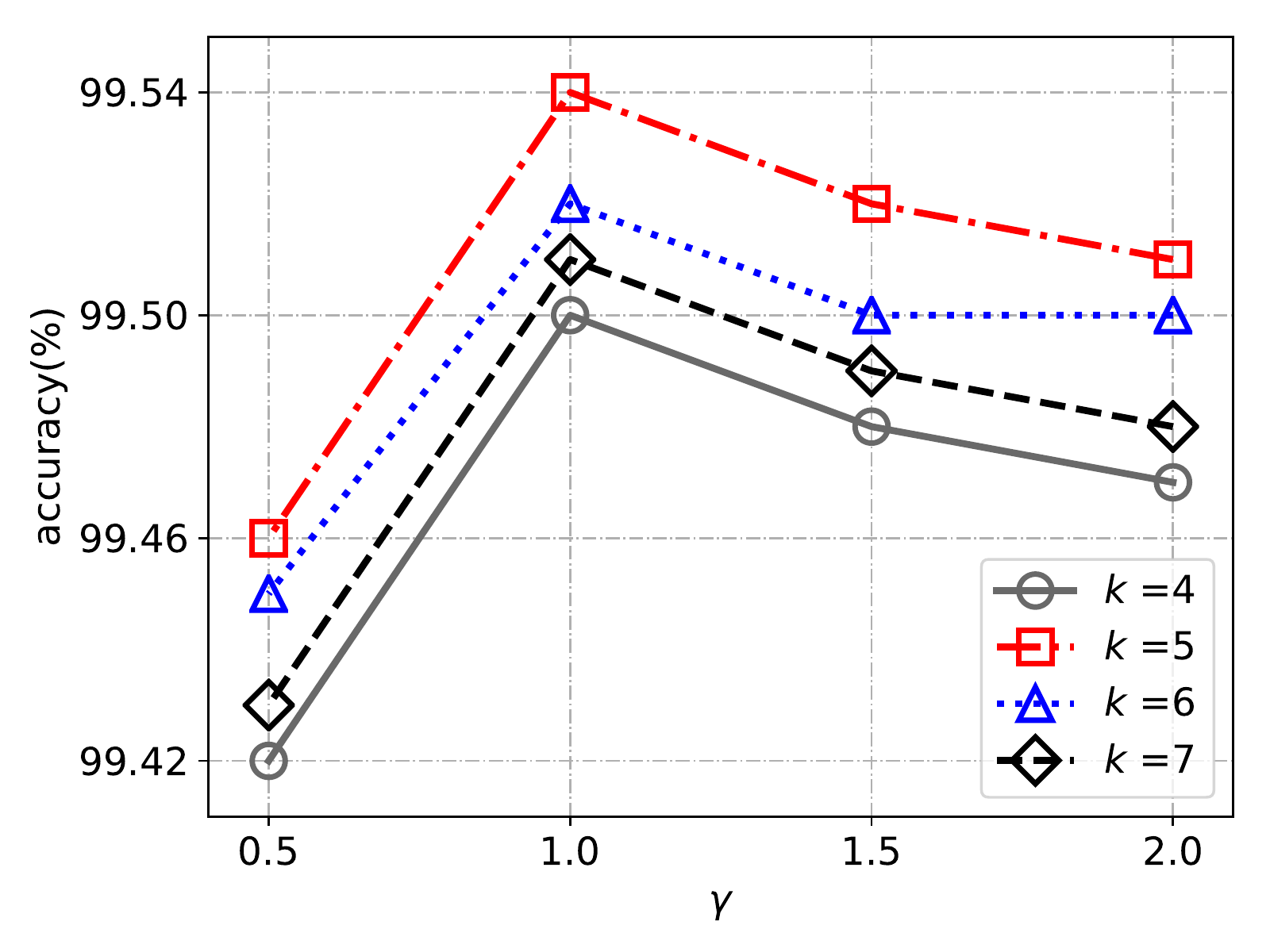}
		\caption{The performance of the proposed method with different parameters $\alpha$ and $\beta$ on the MNIST dataset.
		}
		\label{fig:lambda}
	\end{figure}
	
	\noindent\textbf{Impact of parameters.}
	As discussed above, the proposed method has two hyper-parameters: $k$ and $\gamma$. We tested their impact on the accuracy of the student network on the validation set of MNIST dataset. It can be seen from Figure~\ref{fig:lambda} that the student network trained utilizing the proposed method achieved the highest accuracy (99.54\%) on the validation set when $k=5$ and $\gamma=1$. Therefore, we kept these hyper-parameters for the proposed method on the following experiments.
	
	\begin{table*}[t]
		\caption{Classification results of different networks on CIFAR-10 and CIFAR-100 datasets.}
		\label{table3}
		\begin{center}
			\normalsize
			\begin{tabular}{|c|c|c|c|c|}
				\hline
				\textbf{Algorithm} & \textbf{\#layers} &\textbf{\#params}&\textbf{CIFAR-10}&\textbf{CIFAR-100}\\
				\hline
				\hline
				Teacher &5 &$\sim$9M & 90.21\% & 62.78\% \\ 
				\hline
				Student(Ours)&19 &$\sim$2.5M& \textbf{91.91\%}& \textbf{65.13\%}\\
				\hline
				FitNet~\cite{romero2014fitnets}&19 &$\sim$2.5M& 91.55\%& 64.89\%\\
				\hline
				Knowledge Distillation~\cite{hinton2015distilling}&19 &$\sim$2.5M& 91.04\%& 63.07\%\\
				\hline
				Multiple Teachers~\cite{you2017learning}&19 &$\sim$2.5M& 91.66\%& 65.06\%\\
				\hline
				FSP Learning~\cite{yim2017gift}&19 &$\sim$2.5M& 91.89\%& 64.65\%\\
				\hline
				Adversarial Learning~\cite{wangAAAI18}&19 &$\sim$2.5M& 91.68\%& 65.11\%\\
				\hline
			\end{tabular}
		\end{center}
	\end{table*}
	
	\subsection{Validations on CIFAR-10}
	After investigating the performance of the proposed method on the MNIST dataset, we then evaluated our method on the CIFAR-10 dataset, which consists of $32\times32$ pixel RGB color images with 10 categories. There are 50,000 training images and 10,000 testing images in this dataset. For fair comparison, images in the dataset were first processed using global contrast normalization (GCA) and ZCA whitening as suggested in Goodfellow~\etal~\cite{goodfellow2013maxout} and Romero~\etal~\cite{romero2014fitnets}, and the last 10,000 training images were selected as the validation set for tuning the hyper-parameters. We followed Goodfellow~\etal~\cite{goodfellow2013maxout} to train a teacher network, which consists of three convolutional layers and a fully-connected layer of 500 units with 5-linear-piece maxout activations. The teacher network has about 9M parameters with a 90.21\% classification accuracy, and trained for 400 epochs using RMSProp with an initial learning rate of 0.005 and a weight decay of 0.9. Note that the intermediate hidden layer of the teacher network and student network used in the proposed methods is same as that of FitNet~\cite{romero2014fitnets} for a fair comparison.
	
	\noindent\textbf{Trade off between compression/speed-up and accuracy.} Since the CIFAR-10 dataset consists of more complex images, its samples cannot be easily distinguished by a casually designed neural network. Therefore, several student networks with different architectures, as reported in Romero~\etal~\cite{romero2014fitnets}, were established to further explore the compression performance and accuracy of the proposed method. Table~\ref{table:archi} showed the detailed structures of these student networks. Numbers of parameters in these student networks are 250K, 862K, 1.6M, and 2.5M, respectively. The compression ratio and the speed-up ratio of each student network can be directly calculated by comparing its numbers of parameters and the floating number multiplications to those of the teacher network, respectively. Detailed results were reported in Table~\ref{table2}. 
	
	It can be found in Table~\ref{table2} that, the student network (Student 1) with the highest compression and speed-up ratios has a lower classification accuracy. While, when we appropriately increase the number of parameters, student networks (Student 2-4) can achieve higher performance than that of the teacher network. Moreover, compared with the results of student networks learned using FitNet~\cite{romero2014fitnets}, the proposed method achieves better performance of all the four student networks. 
	
	\noindent\textbf{Complexity.} As discussed in Proposition~\ref{prop}, the proposed method has significantly lower computational and space complexities. Therefore, we also report training time and additional parameters (\ie weights in the fully-connected layer) of the proposed method and conventional FitNet based method in Table~\ref{table2}. It is obvious that, the training time using the proposed method is much less than those of the FitNet and other FitNet based methohds, and we do not need any additional parameters for training student networks. Considering that there are lots of networks with heavy architectures~\cite{VGG,ren2015faster}, the proposed method is a much more flexible approach with higher performance and efficiency.
	
	\noindent\textbf{Comparison with state-of-the-art methods.} To illustrate the superiority of the proposed method, we compared it with other student-teacher learning methods using the same architecture (Student 4) on the CIFAR-10 dataset as summarized in Table~\ref{table3} and Table~\ref{table:10class}. As a result, the student network utilizing the proposed method achieved a 91.91\% accuracy. Comparison results show that our student network outperforms networks produced by state-of-the-art approaches, which proves that the proposed LP loss transfers more useful and intrinsic information from the teacher network. In addition, the student network with more portable architecture (Student 3) can achieve a 91.57\% accuracy, which is slightly higher than that of the student network (Student 4) trained using FitNet.
	
	\subsection{Validations on CIFAR-100}
	Moreover, we also conducted the validation on the CIFAR-100 dataset. This dataset has 60,000 RGB color images of pixel 32$\times$32, which is the same as that of CIFAR-10. However, the CIFAR-100 dataset has 100 categories with only 600 images per class, which is a more challenging benchmark dataset for conducting the classification experiment. For example, the teacher network used in Table~\ref{table3} on the CIFAR-100 dataset is only about 62\%, which is much lower than that on the CIFAR-10 dataset. Similarly, images in this dataset were pre-processed using global contrast normalization and ZCA whitening. Random flipping, random crop and zero padding was also used for data augmentation as suggested in Romero~\etal~\cite{romero2014fitnets}.
	
	We used the fourth student network (Student 4 in Table~\ref{table2}) to conduct the experiment on the CIFAR-100 dataset. Table~\ref{table3} reports the classification results of student networks learned by exploiting different student-teacher learning paradigms with the same architecture. As a result, the student network learned by the proposed method obtained a 65.13\% accuracy. It is clear that the student network learned using the proposed LP loss outperforms those of networks generated by the state-of-the-art methods, which demonstrated that the proposed method inherits useful information from the teacher network in a more effective and efficient way.

	\begin{table*}[t]
		\caption{Classification results of different networks on ImageNet datasets.}
		\label{table4}
		\begin{center}
			\normalsize
			\begin{tabular}{|c|c|c|c|c|}
				\hline
				\textbf{Algorithm} & \textbf{Model} &\textbf{\#params}& \textbf{Top-1}&\textbf{Top-5}\\
				\hline
				\hline
				Teacher & ResNet-101 &$\sim$128M& 77.32\% & 93.42\% \\ 
				\hline
				Student using LP loss (Ours)& Inception-BN &$\sim$32M& \textbf{75.91}\%& \textbf{93.13\%}\\
				\hline
				Standard back-propagation & Inception-BN &$\sim$32M& 74.80\%& 92.18\%\\
				\hline
				FitNet~\cite{romero2014fitnets} & Inception-BN &$\sim$32M& 75.52\%& 92.73\%\\
				\hline
				Knowledge Distillation~\cite{hinton2015distilling}& Inception-BN &$\sim$32M& 75.44\%& 92.65\%\\
				\hline
				Attention Transfer~\cite{paying}& Inception-BN &$\sim$32M& 75.36\%& 92.74\%\\
				\hline
				Neuron Selectivity Transfer~\cite{huang2017like}& Inception-BN &$\sim$32M& 75.66\%& 92.89\%\\
				\hline
			\end{tabular}
		\end{center}
	\end{table*}
	
	\subsection{Validations on ImageNet}
	We next conducted experiments on an extremely large image dataset, namely ImageNet ILSVRC 2012~\cite{krizhevsky2012imagenet}, which has about 1.28M training images and 50K validation images. We followed the settings in Huang and Wang~\cite{huang2017like} to implement the teacher-student learning paradigm. Wherein, we used ResNet-101~\cite{he2016deep} and Inception-BN~\cite{ioffe2015batch} as the teacher and student networks, respectively. The teacher network has about 128M parameters and student network has only about 32M parameters, which is a relatively portable model. We used the scale and aspect ratio augmentation in Ioffe and Szegedy~\cite{ioffe2015batch} and color augmentation in Krizhevsky~\etal~\cite{krizhevsky2012imagenet} following Huang and Wang~\cite{huang2017like}. 
	
	These networks were optimized using Nesterov Accelerated Gradient (NAG), and the weight decay and the momentum were set as $10^{-4}$ and 0.9, respectively. We trained the networks for 100 epochs, and the initial learning rate was set as 0.1 and divided by 10 at the 30, 60 and 90 epochs, respectively. The batch size was set as 256 and the hyper-parameters of the proposed method are the same as those in CIFAR experiments. In addition, $\lambda$ and $\tau$ in knowledge distillation were equal to $2$ and $0.5$, respectively~\cite{huang2017like}. For FitNet, attention transfer and neuron selectivity transfer, the value of $\lambda$ was set to $10^2$, $10^3$ and $5$, respectively, which refers to the settings in~\cite{huang2017like}.
	
	Table~\ref{table3} shows the classification results of student networks on the ImageNet dataset by exploiting the proposed method and state-of-the-art learning methods. The top-1 accuracy and the top-5 accuracy of the teacher network are $77.32\%$ and $93.42\%$, respectively. The student network without inheriting information (\ie the standard BP) from the teacher achieved a $74.80\%$ top-1 accuracy and a $92.18\%$ top-5 accuracy.
	
	It can be found in Table~\ref{table3} that, all the teacher-student learning methods achieve better results than that of the standard back-propagation, which demonstrates that there are abundant information in the teacher network. The student network learned using the proposed method obtained a $93.13\%$ top-5 accuracy, which is about $1\%$ higher than that of the original student network. When compared to other methods, the student network learned through the proposed locality preserving loss achieved the highest accuracy. In addition, as discussed in Proposition~\ref{prop}, the proposed method has significantly lower computational and space complexities. Especially, deep neural networks on the ImageNet dataset have sophisticated architectures with billions of parameters, and the proposed method is more suitable for efficiently learning portable student networks. For example, the number of additional parameters for connecting teacher and student networks needed by FitNet based methods is about 5,035M. In contrast, the proposed method does not need any additional parameters but can obtain better results.
	
	\subsection{Intermediate Layer Selection}
	
	Recalling Eq.~\ref{Fcn:FitNet}, most of the teacher-student learning algorithms utilize the output features of intermediate layers of the teacher and student networks (\ie the guide layer and hint layer). An important issue is how to select the hint layer from teacher network and the guided layer from the student network and how would the selection of such layers affect the performance. Zagoruyko and Komodakis~\cite{zagoruyko2016paying} has discussed this problem and the experimental results showed that different selections of intermediate layers could only have a minor influence on the accuracy of student networks (less than 0.2\% on the CIFAR-10 dataset). Moreover, using more than 1 guided layer/hint layer results in a slight improvement of performance (less than 0.1\% on the CIFAR-10 dataset). Therefore, we do not study the selection of intermediate layers. Instead, we focus on the performance of the student networks under the same experimental settings.

	\section{Conclusion}\label{sec:conclu}
	Here we examine the deep neural network compression problem for learning portable networks from original teacher models. Besides features generated by the teacher network, the relationship between samples in the feature space is another important information for maintaining the performance of the student network. In this paper, we present a novel teacher-student learning paradigm by introducing the locality preserving loss. The neighbor relationship between samples represented by the teacher network is embedded into the student network. Therefore, the resulting portable network can achieve similar performance as that of the teacher network naturally. Experiments on benchmark datasets show that the proposed method can efficiently produce portable neural networks with higher performance, which is superior to the state-of-the-art approaches.

	\ifCLASSOPTIONcaptionsoff
	\newpage
	\fi

	\bibliography{ref}
	\bibliographystyle{IEEEtran}

\end{document}